\newcommand{\RR}{\mathbb{R}}
\newcommand{\defeq}{\stackrel{\textrm{def}}{=}}
\newcommand{\NNF}[3]{\textrm{NNF}_{#1}^{#2\rightarrow #3}}
\newcommand{\OO}{\mathcal{O}}
\newcommand{\vin}{\rotatebox[origin=c]{-90}{$\in$}}
\newcommand\restr[2]{{
		\left.\kern-\nulldelimiterspace 
		#1 
		\vphantom{\big|} 
		\right|_{#2} 
}}
\newcommand{\IFS}[2]{\mathcal{#1} = \{#2\}}
\newtheorem{Theorem}{Theorem}
\newtheorem{Lemma}{Lemma}
\newtheorem{Definition}{Definition}
\renewcommand{\P}{\mathcal{P}}
\newcommand{\one}{\mathbf{1}}
\newcommand{\relu}{\textrm{ReLU}}
\newcommand{\K}{\mathcal{K}}
\title{ Expression of Fractals Through Neural Network Functions}             
\author{Nadav Dym*\\
	Department of Mathematics and Rhodes Information Initiative\\
	Duke University\\
	Durham, NC 27708 \\
	\texttt{nadavdym@gmail.com} \\
	\And
	Barak Sober* \\
	Department of Mathematics and Rhodes Information Initiative\\
	Duke University\\
	Durham, NC 27708 \\
	\texttt{barakino@math.duke.edu} \\
	\And
	Ingrid Daubechies \\
	Department of Mathematics and Rhodes Information Initiative\\
	Department of Electrical and Computer Engineering\\
	Duke University\\
	Durham, NC 27708 \\
	\texttt{ingrid@math.duke.edu} \\
}
\begin{document}
	\maketitle
	\begin{abstract}
		To help understand the underlying mechanisms of neural networks (NNs), several groups have, in recent years, studied the number of linear regions
		$\ell$ of piecewise linear functions generated by deep neural networks (DNN).
		In particular, they showed that $\ell$ can grow exponentially with the number of network parameters $p$, a property often used to explain the advantages of DNNs over shallow NNs in approximating complicated functions.
		Nonetheless, a simple dimension argument shows that DNNs cannot generate all piecewise linear functions with $\ell$ linear regions as soon as $\ell > p$.
		It is thus natural to seek to characterize specific families of functions with $\ell$ linear regions that can be constructed by DNNs.
		Iterated Function Systems (IFS) generate sequences of piecewise linear functions $F_k$ with a number of linear regions exponential in $k$.
		We show that, under mild assumptions, $F_k$ can be generated by a NN using only $\mathcal{O}(k)$ parameters.
		IFS are used extensively to generate, at low computational cost, natural-looking landscape textures in artificial images.
		They have also been proposed for compression of natural images, albeit with less commercial success. The surprisingly good performance of this fractal-based compression suggests that our visual system may lock in, to some extent, on self-similarities in images.
		The combination of this phenomenon with the capacity, demonstrated here,
		of DNNs to efficiently approximate IFS may contribute
		to the success of DNNs, particularly striking for image processing tasks,  as well as suggest new algorithms for representing self similarities in images based on the DNN mechanism.

	\end{abstract}

	\bigskip\nobreak
	In the past few years Deep Neural Networks (DNN) have been remarkably successful in solving problems in computer vision (e.g., \cite{krizhevsky2012imagenet,cirecsan2012multi,qi2017pointnet}), natural language processing (e.g., \cite{manning2014stanford,collobert2008unified}), signal processing (e.g., \cite{Piczak2015Environmental}), and even art investigation (e.g., \cite{Sabetsarvestani2019Artificial}).
	This success has spawned a vast body of literature focusing on both practical and theoretical aspects of DNNs.

	One of the main topics in the theoretical investigation of DNNs is the understanding of their expressiveness and approximation power. The universality theorem shows that even NNs with one hidden layer can approximate any continuous function  \cite{cybenko1989approximation,Hornik1989Multilayer}.
	However, these results do not give a reasonable explanation to the power of depth, which had shown to be pivotal to the success of DNNs.
	In recent years, several works showed that shallow networks are not as efficient as deep ones in terms of their approximation power \cite{poggio2017and,yarotsky2017error,liang2016deep,cohen2016expressive,eldan2016power}.

	In contrast with the study of approximation, the study of  expressiveness focuses on understanding the class of  functions which DNN can express \emph{exactly}.
	As modern DNNs are typically composed of affine transformations and linear activation units (ReLU), functions expressed by DNNs must be continuous and piecewise linear (CPwL) by construction.
	Moreover, it is  well established that neural networks with $p$ parameters can express standard families of CPwL functions with $\OO(p)$ linear regions such as free knot splines and CPwL bases over triangulations  (\cite{arora2018understanding,yarotsky2017error,triangles,ingrid}).
	As these results do not necessarily require the network to be deep this can be considered as an analogue of the universality theorem in terms of expressiveness.
	However, DNNs can also express CPwL functions with $\ell \gg p$ linear regions.
	Several recent works  \cite{arora2018understanding,serra2017bounding,montufar2017notes, montufar2014number}) have studied the  maximal number of linear regions generated by DNNs.
	Their results show that for DNNs $\ell$ can in fact grow exponentially faster than $p$, if the depth of the network is allowed to grow.
	This property is often used as a possible explanation of the advantages of DNNs over shallow networks in expressing complex functions.

	In this paper our aim is to further our understanding of the class of CPwL functions in the domain $\ell \gg p $.
	Using a simple dimension argument we show that DNNs with $p$ parameters cannot span all piecewise linear functions
	with  $\ell \gg p$ linear regions (see Lemma~\ref{lem:dim} in Appendix~\ref{sec:additional}).
	What then are the CPwL functions which can be expressed by a DNN with a small number of parameters? To the best
	of our knowledge there is not much known on this topic, a notable exception being the works of
	\cite{ingrid, DBLP:journals/corr/abs-1901-02220}  which construct some examples of such scalar functions.

	Our main result is that DNNs are capable of expressing fractal-like functions, such as the (almost) indicator
	function of the fractals shown in figure~\ref{fig:IFS}. This construction is applicable for a wide
	family of fractals arising from \emph{Iterated Functions Systems} (IFS) satisfying some weak conditions.

	IFS generate intricate fractal-type shapes by iteratively applying all possible combinations of a small
	fixed number of functions.
	Throughout the history of computer vision research IFS have played an important role \cite{welstead1999fractal}.
	IFS make it possible to generate, at low computational cost, natural-looking landscape textures \cite{barnsley2014fractals}; they
    can be used for  artificial generation of such images in entertainment applications such as movies
	and video games \cite{van1996dynamic}. Self similarity in images has been suggested as a tool for several tasks in computer vision including
	deblurring, dehazing and super resolution \cite{bahat2016blind,michaeli2013nonparametric,irani2009super}.  IFS can also obtain surprisingly good results in image compression \cite{jacquin1992image},
	by encoding geometric similarity of small parts of an (arbitrary) image with other, slightly larger parts elsewhere in
	the same (natural) image. That images compressed and then reconstructed via this coding are perceived as reasonable approximations of
	the original image may be connected as much (or more) with our visual system as with the properties
	of natural images; this observation is reminiscent of the role played by similar comparisons
	between parts of images in \cite{smale2010mathematics}.
	The particular efficiency of NNs at encoding functions
	or shapes generated by IFS, demonstrated in this paper, is intriguing
	in light of this possible
	connection of IFS with the perception of natural images through our own biological
	neural system, together with the observation that NNs were
	originally inspired, at least in part, by an (idealized) version of
	the network formed by biological neurons.

	\section{Problem setup and main results}\label{sec:problemSetup}
	Prior to quoting our main result, we wish to introduce some notations pertaining to neural networks, and give a brief introduction to \emph{iterated function systems}.

	\subsection{Neural network notation}
	The term neural network is being used in the literature to describe both learning algorithms (based upon artificial neural network architecture) and realizations of such architectures; i.e., a given architecture with fixed coefficients to be used as a function.
	In this paper, we focus on a family of functions that can be expressed through such architecture.
	Thus, to emphasize this distinction, throughout the paper we refer to such functions as Neural Network Functions (NNF).
	Furthermore, we limit our discussion to fully connected architectures with ReLU  non-linear activation. We also restrict our attention to the behavior of $f$ on some fixed arbitrarily large compact set $\K$.
	\begin{Definition}[ReLU]
		Let $x = (x_1, \ldots, x_N)^T$, then
		\begin{equation}\label{eq:ReLU}
		\emph{ReLU}(x) \defeq (\max(x_1,0), \ldots, \max(x_N,0))^T
		,\end{equation}
	\end{Definition}

	\begin{Definition}[NNF]
		A \emph{neural network function} is a function of the form
		\begin{equation} \label{eq:NNF}
		f(x) = \eta_{1} \circ \eta_{2} \circ \cdots \circ \eta_{2d+1}(x), \forall x \in \K
		,\end{equation}
		where $\eta^{}_{2i-1} $ is an affine transformation and  $\eta^{}_{2i}=\emph{ReLU}$ for all $i=1,\ldots, d$.
	\end{Definition}
	The representation of $f$ in the form \eqref{eq:NNF} is non-unique, and we refer to a specific representation as $f_{\eta} $. Given such a representation we denote $L(f_\eta)=d $ and refer to it as the \emph{depth} of $f_\eta $ . The \emph{width} of $ f_\eta $ is the maximal input or output dimension of all $ \eta^{}_i $ comprising $ f_\eta $ and is denoted by $ W(f_\eta) $.
	We denote the family of NNFs with depth at most $ d $ and width at most $ w $ by
	\begin{equation}\label{eq:NNF_def}
	\NNF{w,d}{n_1}{n_2} \defeq \left\lbrace f:\K \subseteq \RR^{n_1}\rightarrow\RR^{n_2} |\, \exists f_\eta \text{ s.t. } f=f_\eta,  L(f_\eta)\leq d,~W(f_\eta)\leq w \right \rbrace
	,\end{equation}
	Note that the number of parameters needed to express $ f\in\NNF{w,d}{n_1}{n_2} $ is $\OO(w^2 \cdot d)$.

	\begin{figure}[t]
		\includegraphics[width=\columnwidth]{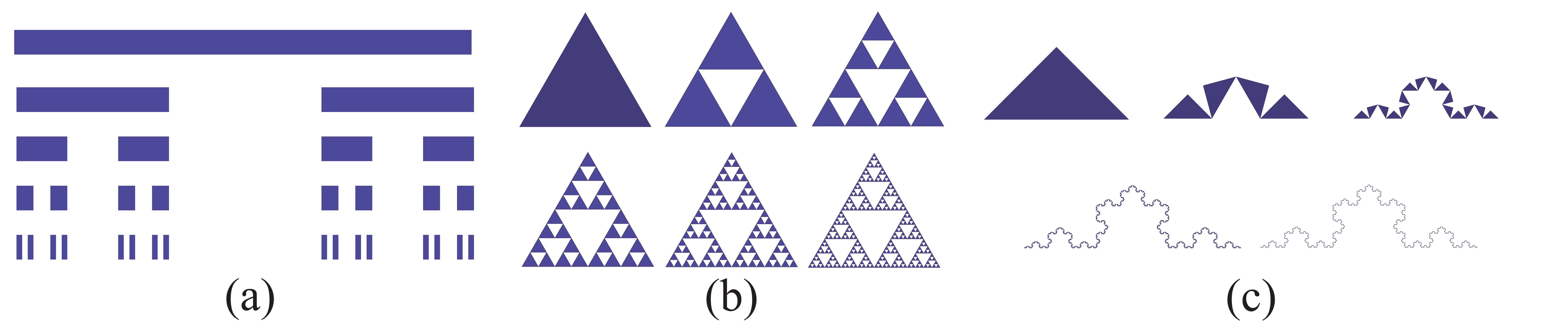}
		\caption{The first few sets in the iterative construction of (a) the Cantor set, (b) The Sierpinski triangle, and (c) the Koch curve.}
		\label{fig:IFS}
	\end{figure}
	\subsection{Iterated Function System}
	We provide a short review on \emph{Iterated Function System} (IFS). The results stated here can be found in the classical book on  IFS \cite{barnsley2014fractals}.

	An  IFS on $\RR^d$ is a method for constructing very detailed patterns from a small number of building blocks.
	Mathematically, they are defined as a finite collection $ \IFS{F}{f_1 \ldots, f_J} $ of contractive mappings $f_j:\RR^d\to \RR^d$.
	Given an initial bounded subset $A_0 \subset \RR^d $, an IFS generates a sequence of increasingly detailed sets $A_k $ by recursively applying the \emph{Hutchinson operator}
	$$ H(A)=\bigcup\limits_{j=1}^J f_j A,$$
	to obtain a sequence of sets
	\begin{equation} \label{eq:Ak}
	A_{k+1}=H(A_k). \end{equation}
	For any compact set $A_0$, the sequence $A_k=H^kA_0 $ converges in the Hausdorff metric to a compact set $K$ which is called the attractor of the IFS.
	However, despite  the asymptotic independence on the choice of the initial set, in practice from visual as well as computational considerations it is useful to work with special sets which we will call
	``nice'' sets:
	\begin{Definition}[Nice sets]
		We say that a bounded set $A$ is nice with respect to $\IFS{F}{f_1,\ldots,f_J} $, if
		\begin{equation} \label{eq:contained}
		f_j A \subseteq A, j=1,\ldots,J
		\end{equation}
		and
		\begin{equation}\label{eq:disjoint}
		f_i A \cap f_j A=\emptyset, \, \forall i \neq j,
		\end{equation}
	\end{Definition}
	The existence of nice sets is an imporant property in the study of IFS (\cite{barnsley2014fractals} page 129):
	\begin{Definition}[Totally disconnected]\label{def:totally}
		An \emph{IFS} is \emph{totally disconnected}  if it possesses a nice compact set.
	\end{Definition}
	\begin{Definition}[Just touching]\label{def:just}
		An \emph{IFS} is \emph{just touching} if it possesses a nice open set.
	\end{Definition}

	From here on, for the sake of clarity and convenience in notation, we will denote compact sets by the character $ C $ or $K$, open sets by $ U $ and general sets by $ A $.

	\subsubsection{Examples}
	\textbf{The cantor set:}
	The cantor set $K_{\textrm{Cantor}}$ is the attractor of an IFS on $\RR$ defined by $ \IFS{F}{f_1, f_2} $, where
	\begin{equation}\label{eq:cantorFun}
	f_1(x)=1/3x \text{ and } f_2(x)=1/3x+2/3 .
	\end{equation}
	The IFS $\mathcal{F}$ is totally disconnected since the set $C_0=[0,1]$ is a nice compact set. Figure~\ref{fig:IFS}(a) shows the first few sets  $C_i$ obtained from this choice of $C_0$.

	\textbf{Just touching} Figure~\ref{fig:IFS}(b)-(c) shows two examples of \emph{just touching} IFS which generate the Sierpinski triangle and the Koch curve. A full description of these IFS is given in the Appendix for completeness.

	\subsection{Main results}
	Our goal is to use NNF to efficiently construct functions which represent sets $A_k$ of the form \eqref{eq:Ak}. The classical choice of a function which represents a set $A$ is  the standard indicator function
	\begin{equation}
	\chi^{}_{A}(x)  = \left\lbrace\begin{array}{ll}
	1 ~,& x\in {A} \\
	0 ~,& x\notin {A}
	\end{array}\right.
	.\end{equation}
	However, this function cannot be realized as an NNF as it is not continuous.
	Instead, we define the notion of a  CPwL \emph{indicator function}
	\begin{Definition} We say that $ \varphi^{}_{C}(x) $ is a \emph{CPwL indicator function} of the compact set $C$, if
		$$
		\left\lbrace\begin{array}{ll}
		\varphi^{}_{C}(x)\geq 0 ~,& x\in {C} \\
		\varphi^{}_{C}(x)< 0 ~,& x\notin {C}
		\end{array}\right.
		.$$
	\end{Definition}
	Note that a compact set $C$ may admit many CPwL indicator functions $\varphi^{}_C$.
	CPwL indicators can be used to approximate indicator functions to arbitrary precision due to the following simple lemma.
	\begin{Lemma}\label{lem:indicator}
		If $\varphi^{}_C \in \NNF{W,L}{d}{1}	$ is a \emph{CPwL} indicator function of $C$, then there exist $\psi_n \in \NNF{\max\{W,2\},L+1}{d}{1} $ which converge to $ \chi^{}_{C}$ pointwise and in $L^p(\K) $ for all  $1 \leq p < \infty $.
	\end{Lemma}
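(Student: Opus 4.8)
The plan is to post-compose $\varphi_C$ with a scalar continuous piecewise-linear ``squashing'' map $g_n\colon\RR\to\RR$ that sends nonnegative inputs to $1$ and sufficiently negative inputs to $0$, and to set $\psi_n\defeq g_n\circ\varphi_C$. Concretely I would take
\[
g_n(t)\;=\;\relu(nt+1)-\relu(nt),
\]
which equals $1$ for $t\ge 0$, equals $0$ for $t\le-1/n$, and interpolates linearly (with slope $n$) on the interval $[-1/n,0]$; in particular $0\le g_n\le 1$ everywhere.

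First I would check pointwise convergence. For $x\in C$ we have $\varphi_C(x)\ge 0$, hence $\psi_n(x)=g_n(\varphi_C(x))=1=\chi^{}_C(x)$ for every $n$. For $x\notin C$ we have $\varphi_C(x)<0$ strictly, so as soon as $n\ge 1/|\varphi_C(x)|$ we get $\varphi_C(x)\le -1/n$ and therefore $\psi_n(x)=0=\chi^{}_C(x)$. Thus $\psi_n\to\chi^{}_C$ pointwise on $\K$.

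Next, for the $L^p$ statement I would invoke dominated convergence. Since $0\le\psi_n\le 1$ and $0\le\chi^{}_C\le 1$, the integrands satisfy $|\psi_n-\chi^{}_C|^p\le 1$, and the constant function $1$ is integrable on the compact (hence finite-measure) set $\K$. Combined with the pointwise convergence just established, dominated convergence gives $\int_\K|\psi_n-\chi^{}_C|^p\to 0$ for every $1\le p<\infty$.

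The step I expect to require the most care is the network bookkeeping, i.e.\ showing $\psi_n\in\NNF{\max\{W,2\},L+1}{d}{1}$ rather than landing at depth $L+2$ or a larger width. The point is that $g_n$ can be realized with a single ReLU layer of width $2$: it is the composition (affine)$\circ$(ReLU on $\RR^2$)$\circ$(affine), where the inner affine is $t\mapsto(nt+1,\,nt)$ and the outer affine is $(a,b)\mapsto a-b$. Writing $\varphi_C=\zeta_1\circ\cdots\circ\zeta_{2L+1}$ with $\zeta_1$ its output affine, I would absorb $\zeta_1$ into the inner affine of $g_n$, merging two consecutive affine maps into one. The resulting representation of $\psi_n$ then has exactly the $L$ ReLU layers inherited from $\varphi_C$ plus the one ReLU layer of $g_n$, for depth $L+1$; every newly introduced layer has dimension at most $2$, so the width is bounded by $\max\{W,2\}$. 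Verifying that this merge is valid and that no dimension exceeds $\max\{W,2\}$ is the only genuinely delicate part; the analytic content (pointwise plus dominated convergence) is routine.
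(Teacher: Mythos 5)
Your proposal is correct and is essentially the paper's own proof: the paper likewise post-composes $\varphi_C$ with the two-ReLU ramp $f_{a,b}$ of \eqref{eq:fab} (taking $a=-2$, $b=-1$ and sharpening via the scaling parameter $t$, which is the same family as your $g_n$ up to the choice of thresholds), verifies pointwise convergence from the strict negativity of $\varphi_C$ off $C$, and gets $L^p(\K)$ convergence from dominated convergence with the uniform bound $1$ on the compact set $\K$. The depth/width bookkeeping you flag as delicate is handled in the paper by the composition rule \eqref{eq:NNFcomposition}, which already merges the adjacent affine layers and yields depth $L+1$ and width $\max\{W,2\}$ exactly as in your merge argument.
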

	The proof of this lemma is given in Appendix \ref{sec:additional}.

	We now state our main result:
	\begin{Theorem}\label{thm:strong}
		Let $\IFS{F}{f_1,\ldots,f_J} $ be an IFS on $\RR^d$. Let $C_0$ be a nice  compact subset of $\RR^d$, and assume
		\begin{enumerate}
			\item $ C_0 $ is a finite union of convex polytopes.
			\item $f_j$ are invertible affine transformations.
		\end{enumerate}
		Then, there exist constants $W_0,L_0 $, such that for all $k$, there exists a \emph{CPwL} indicator function $\varphi_k \in \NNF{W_0,L_0k}{d}{1}$ \, for the set $C_k=H^kC_0 $.
	\end{Theorem}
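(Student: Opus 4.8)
\emph{Proof proposal.} The plan is to build $\varphi_k$ by a recursion on $k$ that exploits the self-similar structure $C_{k}=H(C_{k-1})=\bigcup_{j=1}^{J}f_j C_{k-1}$ together with the invertibility of the $f_j$. Since $f_j$ is a bijection, $x\in f_j C_{k-1}$ iff $f_j^{-1}x\in C_{k-1}$, which tempts one to set $\varphi_{k}(x)=\max_{j}\varphi_{k-1}(f_j^{-1}x)$; this is a correct \emph{CPwL} indicator of $C_k$, but realizing it naively requires $J$ parallel copies of the network for $\varphi_{k-1}$, so the width blows up like $J^{k}$. The whole point will be to replace this branching recursion by a single \emph{deterministic} descent, and here the nice-set hypotheses are essential: by \eqref{eq:contained} we get $C_{k}\subseteq C_0$ and $C_{k}\subseteq C_1\defeq\bigcup_j f_j C_0$ for all $k\ge 1$, while by \eqref{eq:disjoint} the sets $f_jC_0$ are pairwise disjoint, so every $x\in C_1$ lies in exactly one piece $f_{j(x)}C_0$.

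This lets me replace the $J$-fold branching by one \emph{descent map}. Let $T\colon\RR^d\to\RR^d$ be a continuous piecewise linear map with $T|_{f_jC_0}=f_j^{-1}$ for every $j$ (on the gaps $\K\setminus C_1$ the values of $T$ are irrelevant and are chosen only so as to keep $T$ \emph{CPwL}), and let $\varphi_{C_1}$ be any \emph{CPwL} indicator of $C_1$. On $C_1$ the map $T$ sends $x$ to $f_{j(x)}^{-1}x\in C_0$, i.e.\ it performs one ``zoom-in'' step. Unwinding $x\in C_k\Leftrightarrow x\in C_1\wedge T(x)\in C_{k-1}$ and using $C_{k-i}\subseteq C_1$, one obtains the equivalence
\begin{equation}\label{eq:descent}
x\in C_k\quad\Longleftrightarrow\quad T^{(i)}(x)\in C_1\ \text{ for all } i=0,1,\dots,k-1,
\end{equation}
where $T^{(i)}$ is the $i$-fold composition. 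Indeed, iterating $T$ peels off one factor $f_j$ at a time, and after $k$ successful steps one reconstructs $x=f_{j_1}\cdots f_{j_k}\,T^{(k)}(x)$ with $T^{(k)}(x)\in C_0$, i.e.\ $x\in C_k$; conversely any $x\in C_k$ keeps $T^{(i)}(x)\in C_{k-i}\subseteq C_1$. Consequently
\begin{equation}\label{eq:minformula}
\varphi_k(x)\defeq\min_{0\le i\le k-1}\varphi_{C_1}\!\left(T^{(i)}(x)\right)
\end{equation}
is $\ge 0$ exactly on $C_k$ and $<0$ off it, hence is a \emph{CPwL} indicator of $C_k$.

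Formula \eqref{eq:minformula} is tailor-made for a bounded-width realization. I would assemble a single fixed gadget $\Phi\colon\RR^{d+1}\to\RR^{d+1}$, $\Phi(y,s)=\bigl(T(y),\,\min(s,\varphi_{C_1}(y))\bigr)$, which carries the current point $y=T^{(i)}(x)$ together with the running minimum $s$ of the indicator values seen so far. Starting from $\iota(x)=(x,M)$ with a constant $M$ exceeding $\max_{\K}\varphi_{C_1}$, applying $\Phi$ a total of $k$ times, and projecting out the last coordinate yields exactly \eqref{eq:minformula}, hence $\varphi_k$. The operations inside $\Phi$ — the fixed \emph{CPwL} map $T$, the fixed indicator $\varphi_{C_1}$, the pass-through of $s$ realized by $\mathrm{id}=\relu(\cdot)-\relu(-\cdot)$, and $\min(a,b)=a-\relu(a-b)$ — are all \emph{CPwL} maps of dimension and complexity independent of $k$. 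Thus $\Phi$ is a fixed NNF of some width $W_0$ and depth $L_0$, and concatenating $k$ copies (merging adjacent affine maps) gives $\varphi_k\in\NNF{W_0,L_0k}{d}{1}$, as claimed.

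The main obstacle I expect is the construction of the descent map $T$ as a \emph{CPwL} function of complexity independent of $k$. Its values are prescribed, coordinate by coordinate, as the distinct affine maps $f_j^{-1}$ on the pairwise disjoint polytopal sets $f_jC_0$ (finite unions of convex polytopes, since affine images of convex polytopes are convex polytopes); because the pieces do not touch, these prescriptions are mutually consistent and continuous, and I would extend them to a globally \emph{CPwL} map over a fixed polytopal subdivision of $\K$. The passage from ``\emph{CPwL} of fixed finite complexity'' to ``NNF of fixed size'' — for both $T$ and the polytope indicator $\varphi_{C_1}$ — is then supplied by the known equivalence between \emph{CPwL} functions and ReLU networks \cite{arora2018understanding}, applied in the fixed dimension $d$; this is where the boundedness of $W_0,L_0$ ultimately comes from. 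Verifying \eqref{eq:descent} rigorously (especially the backward direction, which is exactly where the disjointness \eqref{eq:disjoint} and $f_jC_0\subseteq C_0$ in \eqref{eq:contained} are used) and checking the strictness condition in the definition of a \emph{CPwL} indicator are the remaining routine points.
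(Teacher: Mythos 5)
Your proposal is correct and follows essentially the same route as the paper: the same deterministic descent map $T$ with $T|_{f_jC_0}=f_j^{-1}$ (made consistent by the disjointness of the pieces of the nice set), the same key equivalence $x\in C_k \iff T^{(i)}x\in C_1$ for $i=0,\dots,k-1$ (the paper's Lemma~\ref{lem:CkiffTk}, proved there by the same induction you sketch), the same formula $\varphi_k(x)=\min_{0\le i\le k-1}\varphi_1\bigl(T^{(i)}x\bigr)$ as in \eqref{eq:phik}, and the same constant-width recurrent assembly carrying the pair (current iterate, running minimum) through $k$ identical blocks, exactly as in \eqref{eq:rnn}. The only deviation is a sub-step: where you realize $T$ by a generic CPwL extension over a polytopal subdivision followed by the CPwL-to-ReLU equivalence of \cite{arora2018understanding}, the paper instead builds $T=\sum_i T_i$ from explicit ReLU gadgets (using the separation $\delta$ between the sets $f_jC_0$ and the bound $M$ on $\|f_i^{-1}\|_{L^\infty(C_0)}$), which is what produces the explicit constants $W_0,L_0$ in \eqref{eq:constants} — for the existence statement as phrased, your non-quantitative citation suffices.
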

	The constants $W_0,L_0 $ are given in Equation~\eqref{eq:constants}.
	They depend on $d,J$, the number $p_0$ of convex polytopes which compose $C_0$, and the maximal number of hyperplanes $m$ needed to express each of the polytopes.


	One example for which Theorem~\ref{thm:strong} applies is the cantor set we discussed earlier, with the initial compact nice set $C_0=[0,1] $.
	The set $C_0$ is an interval- a 1D polytope, and the functions $f_1,f_2$ defined in \eqref{eq:cantorFun} are affine and invertible.
	In this case  $d=1,p_0=1,J=2,m=2 $, and so the theorem together with the parameter count in \eqref{eq:constants} shows that we can represent a CPwL indicator function for $C_k$ as an NNF with width $10$ and depth $4k$.

	We note that by combining the theorem with Lemma~\ref{lem:indicator} we also obtain that the indicator $\chi_{C_k}$ can be approximated to machine precision using a NNF with almost identical complexity. Finally we note that as can be seen in \eqref{eq:rnn} the DNN construction we propose is composed of an identical network applied $k$ times to obtain a CPwL indicator of $C_k$, and so by using a recurrent NN architecture instead of the fully connected architecture we described here the number of parameters necessary to  represent $C_k$ can be completely independent of $k$.

	We now turn to discuss \emph{just touching} IFS; that is, IFS which possesses a nice open set $U$. In this case, we can use the mechanism of the proof of Theorem \ref{thm:strong} to approximate $\chi_{U_k}$ to arbitrary precision, though we cannot construct an exact CPwL indicator in this case:
	\begin{Theorem}\label{thm:open}
		Let $\IFS{F}{f_1,\ldots,f_J} $ be an IFS on $\RR^d$. Let $U_0$ be a nice open subset of $\RR^d$, and assume
		\begin{enumerate}
			\item $ U_0 $ is a finite union of disjoint open convex polytopes.
			\item $f_j$ are invertible affine transformations.
		\end{enumerate}
		Then, there exist constants $W_0,L_0 $, such that for all $k$, there exist a sequence of functions $\psi_n \in \NNF{W_0,L_0k+1}{d}{1}$ \, which converge to the indicator function of $U_k=H^kU_0 $ pointwise and in $L^p(\K) $ for all compact $\K $ and $1 \leq p < \infty $.
	\end{Theorem}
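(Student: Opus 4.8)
The plan is to reuse the recurrent construction behind Theorem~\ref{thm:strong}, replacing its exact ``fold'' map by a family of approximate folds indexed by $n$, and to finish with the clamping device of Lemma~\ref{lem:indicator}. Recall that for a nice \emph{compact} set the proof of Theorem~\ref{thm:strong} produces a single CPwL block $R$ of bounded width and depth with $R|_{f_j C_0}=f_j^{-1}$, which steers everything outside $\bigcup_j f_jC_0$ into a region where the initial indicator $\varphi_0$ is negative; the indicator of $C_k$ is then $\varphi_0\circ R^{k}$, of width $\OO(1)$ and depth $\OO(k)$. Total disconnectedness is used in exactly one place: the pieces $f_jC_0$ are separated by a positive gap, so a continuous $R$ can interpolate across the gaps. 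For a just touching IFS the pieces $f_jU_0$ are disjoint but their closures may meet, so no continuous exact fold exists. This is precisely the obstacle, and the reason only an approximation can be hoped for.

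First I would fix the target. Since each $f_j$ is affine and invertible and the $f_jU_0$ are disjoint, an easy induction shows $U_k=H^kU_0=\bigsqcup_{|w|=k}f_wU_0$ is a finite disjoint union of $J^kp_0$ open convex polytopes, so $\partial U_k$ is a finite union of pieces of hyperplanes and has Lebesgue measure zero. I would work with the \emph{open} indicator $\varphi_0=\max_i g_{P_i}$, where $U_0=\bigsqcup_iP_i$ and $g_P(x)=\min_\ell(b_\ell-\langle a_\ell,x\rangle)$ for $P=\{x:\langle a_\ell,x\rangle<b_\ell\}$; then $\varphi_0>0$ exactly on $U_0$ and $\varphi_0\le0$ off $U_0$, and $\varphi_0$ lies in $\NNF{W_0,L_0}{d}{1}$ with the constants of Theorem~\ref{thm:strong}.

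Next, for each $n$ I would build an approximate fold $R_n$: a fixed CPwL map, of width and depth bounded independently of $n$ and $k$, that equals $f_j^{-1}$ on the part of $f_jU_0$ at distance $>1/n$ from the touching boundaries, and maps the rest of $\RR^d$ (the collars of width $1/n$ around those boundaries, the gap $U_0\setminus\bigcup_jf_jU_0$, and the exterior) into a fixed parked region on which $\varphi_0\le0$ and which $R_n$ maps into itself. Because the defining hyperplanes of $R_n$ are those of Theorem~\ref{thm:strong} merely offset by $1/n$, its combinatorial complexity—hence width and depth—is the same for every $n$. Setting $\varphi_{k,n}:=\varphi_0\circ R_n^{\,k}$ and $\psi_n:=\min\bigl(1,\max(0,n\,\varphi_{k,n})\bigr)$ yields, via the clamp of Lemma~\ref{lem:indicator}, functions in $\NNF{W_0,L_0k+1}{d}{1}$.

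Finally I would prove convergence using the defining property of the exact fold, $x\in U_m\iff R(x)\in U_{m-1}$, which iterates to $x\in U_k\iff R^{k}(x)\in U_0$. If $x\in U_k$, then at each of the $k$ decision steps the orbit sits in an open level-$0$ piece, away from the touching boundaries; hence for $n$ large the collars are missed, $R_n^{\,k}(x)=R^{k}(x)\in U_0$, and $\varphi_{k,n}(x)>0$, so $\psi_n(x)\to1=\chi_{U_k}(x)$. If $x\notin U_k$ and its exact orbit avoids the touching boundaries, then for $n$ large $R_n$ is again exact and the biconditional forces $\varphi_{k,n}(x)\le0$, so $\psi_n(x)=0$. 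If instead the orbit meets a touching boundary at some decision step, then for every $n$ it enters a collar and is parked, so $\varphi_{k,n}(x)\le0$ and $\psi_n(x)=0$; and such $x$ necessarily lies outside $U_k$, so this matches $\chi_{U_k}(x)$ as well. This gives pointwise convergence everywhere, and since $0\le\psi_n\le1$, dominated convergence yields convergence in $L^p(\K)$ for every compact $\K$ and $1\le p<\infty$. The crux—and the only genuinely new difficulty beyond Theorem~\ref{thm:strong}—is the construction of $R_n$ and the control of the set of points whose $R_n^{\,k}$-orbit meets a collar, which is exactly where the absence of a positive gap between the pieces $f_jU_0$ must be confronted.
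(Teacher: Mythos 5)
Your overall strategy is the paper's: replace the discontinuous exact branch-inverse map $T$ (equal to $f_j^{-1}$ on $f_jU_0$, parked at a fixed point elsewhere) by CPwL approximations whose transition region shrinks with $n$, note that Lemma~\ref{lem:CkiffTk} needs no compactness, then finish with a ramp (the paper's $f_{a,b}$ with $a=\varphi_k(0)<0$, $b=0$) and dominated convergence for $L^p(\K)$, exactly as in Lemma~\ref{lem:indicator}. One caveat on your premise: Theorem~\ref{thm:strong} does \emph{not} produce $\varphi_0\circ R^k$ with a parking mechanism — the paper's $T$ is uncontrolled off $\bigcup_j f_jC_0$, and misclassification is prevented by the running minimum \eqref{eq:phik}, which tests membership in $C_1$ at \emph{every} step (this is why the paper only needs $0\notin\bar U_1$, while your end-test architecture needs the park point outside $\bar U_0$). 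Your single-end-test variant with an absorbing parked region is a legitimate alternative, but it is not a "reuse" of the paper's block; it imposes the extra forward-invariance requirement that the paper's per-step min avoids.

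The genuine soft spot is your $R_n$. As literally specified — equal to $f_j^{-1}$ at distance $>1/n$ from the touching boundaries and mapping \emph{all} the rest of $\RR^d$ into the parked region — $R_n$ is discontinuous across the inner collar edge, hence not CPwL and not an NNF. Any continuous repair must contain a transition band on which $R_n$ is neither exact nor parked, and the image of that band is uncontrolled: by continuity it sweeps from values of $f_j^{-1}$ lying in $U_0$ (where $\varphi_0>0$) to the park point. Consequently your case (c) ("orbit meets a touching boundary $\Rightarrow$ enters a collar and is parked for every $n$") is not automatic. The paper resolves precisely this by reusing the clamp of Subsection~\ref{sub:T}: it builds $T_i^{\delta}$ vanishing whenever $\varphi_{f_i\bar U_0}(x)\le 0$ — so exact touching points, where $\varphi_i=0$, are sent to the park point $0$ for \emph{every} $\delta$ — and equal to $f_i^{-1}$ whenever $\varphi_i\ge\delta$, the transition being keyed to sublevel sets $0<\varphi_i<\delta$ of the CPwL indicators rather than to metric distance. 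With that clamp, the remaining step (which you defer as "the crux") is handled pointwise, not by measure estimates: for a fixed $x$, each point of the exact orbit either has $\varphi_i>0$ for a unique $i$, hence escapes the shrinking band once $\delta<\varphi_i$, or has all $\varphi_i\le 0$ and is parked for every $\delta$; by induction over the $k$ steps the approximate orbit stabilizes to the exact one, giving $\varphi_k^{\delta}\to\varphi_k$ pointwise. So your proposal is the right route, but it stops exactly where the new work lies, and the one construction it does assert is, as stated, incompatible with continuity.
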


	Finally, we note that any IFS on $\RR^d$ can be lifted to a totally disconnected IFS on $\RR^{d+1}$. We discuss this simple construction in Appendix~\ref{app:lift}.

	The remainder of the paper is organized as follows: In Section~\ref{sec:prelim} we discuss some theoretical preliminaries, which will be useful for the proof of Theorem~\ref{thm:strong}, presented in Section~\ref{sec:proof}. An outline of the proof of Theorem~\ref{thm:open}, the proof of  Lemma~\ref{lem:indicator}, and the dimension argument mentioned in the introduction (i.e., the fact that DNNs with $ p $ parameters cannot span all CPwL with $ \ell \gg p $ linear regions), can be found in  Appendix~\ref{sec:additional}.
	\section{Theoretical preliminaries}\label{sec:prelim}
	In this section we wish to define formally some basic actions on NNF as well as two NNF \emph{building blocks} that will be used in our construction later on.
	The actions we wish to address are \emph{composition}, \emph{splitting}, and \emph{summation}; and the building blocks are the commonly used \emph{min} and \emph{max} NNFs.

	\subsection{Basic actions}\label{sub:basic}
	\textbf{Composition:} Using the aforementioned notation, the composition of two neural network functions is
	\begin{equation}\label{eq:NNFcomposition}
	f\in\NNF{w_1,d_1}{n_1}{n_2} ~,~ g\in\NNF{w_2,d_2}{n_2}{n_3} ~\Rightarrow~
	g\circ f \in \NNF{\max(w_1,w_2),d_1+d_2}{n_1}{n_3}
	.\end{equation}
	Accordingly, the number of parameters in $ g\circ f $ is $ \OO\left(\max(w_1,w_2)^2\cdot (d_1+d_2)\right) $.

	\textbf{Splitting:} It is sometimes desirable to split a function into two separate operations
	\begin{equation}\label{eq:split}
	\begin{array}{ccc}
	& & f(x) \\
	&\nearrow & \\
	x & & \\
	\vin & \searrow & \\
	\RR^n & & g(x)
	\end{array}
	\end{equation}
	where $ f\in\NNF{w_1,d_1}{n}{n_1} ~,~ g\in\NNF{w_2,d_2}{n}{n_2} $.
	The NNF resulting from the action of splitting \eqref{eq:split} is denoted by $ x^f_g $ and
	\begin{equation}\label{eq:splitNNF}
	x^f_g \in \NNF{w_1 + w_2, \max(d_1,d_2)}{n}{(n_1 + n_2)}
	.\end{equation}
	In the case where $d_1=d_2 $ the definition of $x^f_g $ is straightforward. The case $d_1<d_2 $ can be reduced to the case $d_1=d_2$ by representing $f$ using a network of depth $d_2$. This is done by shifing $f$ by such a large value $b $ so that $f(\K)+b \geq 0 $, and so $f+b $, will be unaffected by ReLU, then applying identity mappings, and finally reapplying translation by $-b$ to achieve the original function $ f$.

	%
	\textbf{Summation:} Finally, given $ f\in \NNF{w_1, d_1}{n_1}{n_2} $ and $ g\in \NNF{w_2, d_2}{n_1}{n_2} $, the function $ f+g $ is well defined and
	\begin{equation}
	f + g =
	\begin{array}{ccccc}
	& & f(x) &&\\
	&\nearrow &&\searrow& \\
	x & & &&f(x)+g(x)\\
	\vin & \searrow &&\nearrow& \\
	\RR^{n_1} & & g(x)&&
	\end{array}
	.\end{equation}
	Thus, $ f+g $ is merely $ x^f_g $ with an added linear layer as output, and so
	\begin{equation}
	f+g\in \NNF{(w_1+ w_2), max(d_1, d_2)+1}{n_1}{n_2}
	.\end{equation}

	\subsection{min and max as NNF}\label{sub:minmax}
	Although common neural network architectures apply min and max operations (e.g., max-pooling), in order to maintain consistency with the standard definition of fully connected networks we define them here as NNF using ReLU.
	We can pronounce the max and min operation as
	\begin{equation}\label{eq:maxAndMin}
	\max\{x,y\} = \textrm{ReLU}(x-y) + y \text{,  and  } \min\{x,y\} = x- \textrm{ReLU}(x-y)
	,\end{equation}
	Thus, for $ x, y\in \RR $
	\begin{equation}
	\min\{x,y\}, \max\{x,y\}\in \NNF{2,1}{2}{1}
	.\end{equation}
	For a vector $ x\in\RR^\ell $ we define
	\begin{equation}
	\min(x) \defeq \min\{x_1,\ldots,x_\ell\}
	.\end{equation}
	This minimum operation  can be implemented via a \emph{Divide and Conquer} type approach (e.g., see \cite{triangles}, Theorem 3.1) with $\log_2 \ell$ depth, and therefore
	\begin{equation}\label{eq:vecMin}
	\min(x)\in \NNF{\ell, \lceil\log_2(\ell) \rceil}{\ell}{1}
	.\end{equation}

	Furthermore, for $ x\in \RR^\ell $ and $y \in \RR $ we denote the vector-scalar minimum  operation by
	\begin{equation}
	m\{x,y\} \defeq (\min\{x_1, y\},\ldots,\min\{x_\ell, y\})
	,\end{equation}
	and it follows that
	\begin{equation}\label{eq:minmaxNNF}
	m\{x,y\},\in \NNF{\ell+1,1}{\ell+1}{\ell}
	.\end{equation}
	\section{Proof of Theorem~\ref{thm:strong}}\label{sec:proof}


	The outline of the proof is as follows:
	\begin{enumerate}
		\item\label{outline:T} Since $ C_0 $ is a nice set with respect to the IFS, there exists a function $ T:\RR^d\rightarrow\RR^d $, which satisfies $ T\circ f_j(x) = x $.
		\item\label{outline:TCkEquivalence} Then, we can establish the following relation (see Lemma \ref{lem:CkiffTk})
		\begin{equation}
		x\in C_k \iff  x,Tx,\ldots,T^{k-1}x \in C_1
		.\end{equation}
		\item\label{outline:TandPhiNNF} Both $ T $ and $ \varphi^{}_{1} $ (a CPwL indicator of $ C_1 $) can be constructed as NNF (see Sections \ref{sub:phi} and \ref{sub:T}).
		\item\label{outline:phik} Then, from the construction of vector \emph{min} as NNF and the fact that composition of NNFs is NNF (see Sections \ref{sub:basic} and \ref{sub:minmax}), we get that a CPwL indicator of $ C_k $ can be pronounced as the following NNF
		\begin{equation}\label{eq:phik}
		\varphi_k(x)=\min(\varphi_1(x),\varphi_1(Tx),\ldots \varphi_1(T^{k-1}x))
		.\end{equation}
		\item\label{outline:TphikCoeffs} Using the basic actions' calculations for width and depth (Section \ref{sub:basic}) we come to the conclusion that
		\begin{equation}
		\varphi_k \in \NNF{W_0, L_0 k}{d}{1}
		,\end{equation}
		where $ W_0, L_0 $ are some constants independent of $ k $ (see Section \ref{sub:phikConstruction}).
	\end{enumerate}

	Let us now show that if there exists a function $ T $, as described in item \ref{outline:T} in the outline, then the equivalence relation of item \ref{outline:TCkEquivalence} holds.
	\begin{Lemma}\label{lem:CkiffTk}
		Let $\{f_1,\ldots,f_J \}$ be an IFS, and assume $C_0 $ is a nice set, and $f_j C_0 \subseteq C_0, \forall j $. Assume $T:\RR^d \to \RR^d $ satisfies $T \circ f_j(x)=x $ for all $x \in C_0 $,
		then
		$$ x\in C_k \iff  x,Tx,\ldots,T^{k-1}x \in C_1.$$
	\end{Lemma}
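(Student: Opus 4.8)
The plan is to prove the equivalence by induction on $k$, after first reducing the $k$-step statement to a single application of $T$. The starting observation is that niceness forces the sets $C_k = H^k C_0$ to be nested: since $f_j C_0 \subseteq C_0$ for every $j$, we have $C_1 = \bigcup_{j} f_j C_0 \subseteq C_0$, and because the Hutchinson operator $H$ is monotone with respect to inclusion, $C_{k+1} = H(C_k) \subseteq H(C_{k-1}) = C_k$. Hence $C_k \subseteq C_{k-1} \subseteq \cdots \subseteq C_1 \subseteq C_0$, a fact I will use repeatedly to guarantee that the intermediate points live in $C_0$, where the identity $T \circ f_j = \mathrm{id}$ is available.

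The heart of the argument is the one-step reduction
\begin{equation}
x \in C_k \iff \big(x \in C_1 \ \text{and}\ Tx \in C_{k-1}\big).
\end{equation}
For the forward direction I write $x \in C_k = H(C_{k-1}) = \bigcup_j f_j C_{k-1}$ as $x = f_{j_1}(z)$ with $z \in C_{k-1} \subseteq C_0$, and then apply $T$ to peel off the outermost map: $Tx = T\circ f_{j_1}(z) = z \in C_{k-1}$, while $x \in C_k \subseteq C_1$ by nestedness. For the converse, if $x \in C_1 = \bigcup_j f_j C_0$ then $x = f_{j_1}(z)$ for some index $j_1$ and some $z \in C_0$; applying $T$ gives $Tx = z$, so the hypothesis $Tx \in C_{k-1}$ says exactly that $z \in C_{k-1}$, whence $x = f_{j_1}(z) \in f_{j_1}(C_{k-1}) \subseteq H(C_{k-1}) = C_k$.

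With the reduction in hand the induction is immediate. The base case $k = 1$ is the tautology $x \in C_1 \iff x \in C_1$. For the inductive step I apply the inductive hypothesis to the point $Tx$ at level $k-1$, obtaining $Tx \in C_{k-1} \iff Tx, T^2 x, \ldots, T^{k-1} x \in C_1$; substituting this into the reduction yields $x \in C_k \iff x, Tx, \ldots, T^{k-1}x \in C_1$, as desired.

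The step I expect to require the most care is the backward direction of the reduction, and in particular the passage $Tx = z$. This identity is only meaningful because the representation $x = f_{j_1}(z)$ with $z \in C_0$ determines a single value $z = Tx$, rather than several incompatible ones. It is precisely the disjointness clause $f_i C_0 \cap f_j C_0 = \emptyset$ in the definition of a nice set that prevents $x$ from possessing two distinct preimages in $C_0$, and thereby makes the global map $T$ agree unambiguously with the local inverse on all of $C_1$. I will therefore flag explicitly where niceness is invoked, so that the single-valuedness of this peeling operation stays transparent throughout the induction.
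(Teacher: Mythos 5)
Your proof is correct and takes essentially the same route as the paper's: both arguments peel off the outermost map via the identity $T\circ f_j=\mathrm{id}$ on $C_0$, use the nestedness $C_k\subseteq\cdots\subseteq C_1\subseteq C_0$ (which the paper uses implicitly when it writes $T^rx\in C_{k-r}\subseteq C_1$), and handle the converse by induction on $k$ --- you merely package the peeling step as a two-sided one-step reduction, while the paper proves the forward direction directly from the representation $x=f_{j_1}\circ\cdots\circ f_{j_k}(y)$. One small remark on your final paragraph: within the lemma the disjointness clause $f_iC_0\cap f_jC_0=\emptyset$ is never actually invoked, since the hypothesized functional equation already forces $Tx=z$ for \emph{any} representation $x=f_{j_1}(z)$ with $z\in C_0$; disjointness matters only for the \emph{existence} of such a $T$, which the lemma takes as given.
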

	\begin{proof}
		Assume $x\in C_k$, then there exist $j_1,j_2,\ldots j_k \in \{1,\ldots,J\} $, and $y \in C_0$, such that
		$$x=f_{j_1}\circ f_{j_2}\ldots \circ f_{j_k}(y) $$
		Thus for all $0 \leq r \leq k-1 $
		$$T^rx=f_{j_{r+1}} \circ f_{j_{r+2}}\ldots \circ f_{j_k} (y) \in C_{k-r} \subseteq C_1$$
		as required.

		We prove the opposite direction by induction on $k$. For $k=1$ the claim is obvious. Now assume the claim holds for $k-1$, we want to show that it holds for $k$ as well- we assume $x,Tx,\ldots T^{k-1}x \in C_1 $ and we need to show that $x \in C_k$.

		Since $x\in C_1 $, there exists $y_1 \in C_0 $ and $j_1 \in \{1,\ldots,J \}$, such that
		\begin{equation}\label{eq:x}
		x=f_{j_1}(y_1).
		\end{equation}

		By applying $T$ to this equation we obtain
		\begin{equation}\label{eq:Tx}
		Tx=y_1 .
		\end{equation}
		On the other hand by the induction hypothesis we have that $Tx \in C_{k-1} $ so that there exists $y_2 \in C_0 $ and $j_2,\ldots,j_k \in \{1,\ldots,J \}$ such that
		\begin{equation}\label{eq:Tx2} Tx=f_{j_2}\circ f_{j_3} \ldots \circ f_{j_k} (y_2)  \end{equation}
		and so it follows that $x \in C_k $ since
		$$ x\stackrel{\eqref{eq:x}}{=} f_{j_1}(y_1)\stackrel{\eqref{eq:Tx}}{=} f_{j_1}(Tx)\stackrel{\eqref{eq:Tx2}}{=} f_{j_1}\circ f_{j_2}\circ f_{j_3} \ldots \circ f_{j_k} (y_2)$$
	\end{proof}

	\subsection{Construction of $\varphi_1$}\label{sub:phi}
	We now show how to construct a CPwL indicator function $\varphi $ for the set $C_1$, using the same methodology as in \cite{shahar}.
	By assumption, $C_0$ is a union of some $p_0 $ convex polytopes, each defined as an intersection of at most $m $ half-spaces.
	Accordingly $C_1$ is a finite union of $p=Jp_0 $ such polytopes.

	A half-space $H$ is defined by a linear function $\varphi_{H}:\RR^d\to \RR $
	\[\varphi_{H} \defeq \langle a, x \rangle + b,\]
	and $$H=\{x| \, \varphi_{H}(x) \geq 0 \} .$$
	Thus, by definition $\varphi_H$ is a CPwL indicator of $H$ and is a member of $\NNF{1,0}{d}{1} $. If $\P$ is a convex polytope which is an intersection of at most $m$ half-spaces $H_i $, then we obtain a CPwL indicator for $\P$ via
	$$\varphi^{}_{\P}(x)=\min(\varphi_{H_1}(x),\varphi_{H_2}(x),\ldots \varphi_{H_m}(x)). $$
	Note that $\varphi_{\P}\in \NNF{m,\lceil \log_2 m \rceil}{d}{1}$ due to the splitting, composition and minimum rules from Section~\ref{sec:prelim}.

	Now, if a set $C$ is the union of $p$ polytopes $\P_i $, each defined by at most $m$ half-spaces, then a CPwL indicator $\varphi_C$ for $C$ is given by
	$$\varphi_C(x)=\max(\varphi_{\P_1}(x),\varphi_{\P_2}(x),\ldots, \varphi_{\P_p}(x) ) ,$$
	which is in $\NNF{mp, \lceil \log_2 m \rceil+ \lceil \log_2 p \rceil}{d}{1}$ due to the splitting, composition and maximum rules from Section~\ref{sec:prelim}.
	This is the class of function $\varphi_1=\varphi_{C_1} $ belongs to, when we set $p=Jp_0 $.

	\subsection{Construction of $ T $ as NNF}\label{sub:T}
	We now construct $ T $ as an NNF. $T$ will be of the form
	$
	T = \sum_{j=1}^J T_j
	$
	where each $T_i$ will be an NNF satisfying
	\begin{equation}\label{eq:Ti}T_i(x) =\delta_{ij}f_i^{-1}(x), \quad  \forall x \in \bigcup_{j=1}^Jf_j C_0
	\end{equation}
	Let,
	$$M=\|f_i^{-1}\|_{L^{\infty}(C_0)}, \text{ and } -\delta=\max_{x \in \cup_{j \neq i}f_jC_0} \varphi^{}_{f_iC_0}(x)<0, $$
	and $\one_d$ denotes the constant vector with unit value in all $d$ entries . Recall that $ f_i^{-1} $ is defined for all points in $ \RR $, as it is the inverse of an affine transformation.
	Then, we construct $T_i$ as NNF according to the following scheme
	\begin{equation}
	\begin{array}{c}
	\begin{array}{*2c | c | c | c | c | c | *2c}
	\cline{3-3}\cline{5-5}\cline{7-7}
	&&\stackrel{}{f_i^{-1}(x)}&\rightarrow&f_i^{-1}(x)&&&&\\
	&\nearrow&&&&\searrow&&&\\
	x&\rightarrow&\underbrace{\varphi_{f_i C_0}(x)}&\rightarrow&\underbrace{\frac{4M}{\delta}(y_1 + \delta/2)}&\rightarrow&\underbrace{m\{f_i^{-1}(x), y_2\}}&\rightarrow&\cdots\\
	&&y_1\in\RR&&y_2\in\RR&&y_3\in\RR^d&& \\ \cline{3-3}\cline{5-5}\cline{7-7}
	\end{array}\\ \\
	\begin{array}{cc|c|ccccc}\cline{3-3}
	\cdots&\rightarrow&\stackrel{}{\relu(2(y_3+M\one_d))}-\relu(y_3+2M\one_d)&&&&&\\
	\cline{3-3}
	\end{array}
	\end{array}
	\end{equation}
	For convenience, we enumerate the sequential operations (marked by the boxes) by $ \ell_1,\ldots,\ell_4 $.
	We claim that $T_i$ constructed in this form  fulfills \eqref{eq:Ti}.
	Note that the magnitudes of the coordinates of $f_i^{-1}(x) $ are smaller than $M$ for all $x \in C_0 $.
	By construction, $y_2 > 2M $ if $x\in f_i(C_0) $ and is smaller than $-2M $ if $x \in f_j(C_0)$ for $j \neq i $.
	It then follows that if $x \in f_i(C_0)$ then $y_3=f_i^{-1}(x) $ and if $x \in f_j(C_0)$ for $j \neq i $ then $y_3\leq -2M $ elementwise. Finally by applying the function
	\begin{equation}
	\relu(2(t+M))-\relu(t+2M)=
	\left\{
	\begin{array}{lll}
	0      & \mbox{if } t \leq -2M \\
	-t-2M   & \mbox{if } -2M \leq t \leq -M\\
	t      & \mbox{if } t \geq -M
	\end{array}
	\right.
	\end{equation}
	elementwise to $y_3$, we obtain $T_i(x) $ which satisfies  \eqref{eq:Ti}. Using the parameter counting rules from Section~\ref{sec:prelim} we have
	\begin{equation*}
	\ell_1 = x^{f_i^{-1}}_{\varphi_{f_i C_0}} \in \NNF{d+mp_0, \,\lceil \log_2 m \rceil+ \lceil \log_2 p_0 \rceil}{d}{d+1}
	\end{equation*}
	\begin{equation*}
	\ell_2 \in \NNF{d+1,0}{d+1}{d+1}; \quad \ell_3 \in \NNF{2d,1}{d+1}{d}; \quad \ell_4 \in \NNF{2d,1}{d}{d}
	\end{equation*}
	and so using the composition we have that  $T_i\in \NNF{1/JW_T,L_T}{d}{d}$ for
	\begin{equation}
	W_T=J\max\{d+mp_0,2d \}, \, L_T=2+ \lceil \log_2 m \rceil+ \lceil \log_2 p_0 \rceil,
	\end{equation}
	and by using the summation rule to count the parameters of $T$ we obtain that $T \in \NNF{W_T,L_T}{d}{d}$.

	\subsection{Construction of $ C_k $ through NNF}\label{sub:phikConstruction}
	To conclude the proof of the theorem, we show how to construct a CPwL indicator of $C_k$ as an NNF based on \eqref{eq:phik} and our construction of $T$ and $\varphi$. We set
	$$m_0(x)=\varphi_1(x), \quad m_j(x)=m_j(m_{j-1},T^j x)=\min\{m_{j-1}(x),\varphi_1 (T^{j}x)\}, \quad  1 \leq j \leq k-1 .$$
	and note that $m_j\in \NNF{mp, \, \lceil \log_2 m \rceil+\lceil \log_2 p \rceil+1}{d+1}{1} $ while $m_0$ has one layer less. Note that the indicator $\varphi_k$ from \eqref{eq:phik} is equal to $m_{k-1}$, and so $\varphi_k$ can be computed by the following NNF:

	\begin{equation}\label{eq:rnn}
	\begin{array}{llllllllllll}
	& &Tx &\rightarrow & T^2x& \rightarrow& \cdots& & T^{k-1}x& &\\
	&\nearrow & &\searrow & & \searrow& & & &\searrow &\\
	x&\rightarrow &m_0(x) & \rightarrow & m_1(x) & \rightarrow & \cdots & & m_{k-2}(x) &\rightarrow &m_{k-1}(x)\\

	\end{array}
	.\end{equation}
	The width of $m_j$ is the same as the width of $\varphi_1$ and it is one layer deeper, therefore $\varphi_k(x)=m_{k-1}(x) $  can be constructed as an NNF with constant width $W_0$ and depth $kL_0 $ , where these constants are defined by
	\begin{equation}\label{eq:constants} W_0=J\max\{d+mp_0,2d\}+Jmp_0, \, L_0=\lceil\log_2 m \rceil+\lceil\log_2(Jp_0) \rceil+2 .\end{equation}

	%



	\section{Future work} Our results can be interpreted as showing that black and white images of 2D fractals- the indicator functions of $C_k$- can be compressed efficiently by DNN.
	DNN are also known to provide efficient construction of wavelets \cite{DBLP:journals/corr/abs-1901-02220}. As both IFS and wavelets have been used successfully to compress images we believe it may be possible to devise improved compression algorithms based on DNNs. We intend to investigate this in the near future.
	\bibliographystyle{apalike}
	\bibliography{fractalbib}

\begin{thebibliography}{}

\bibitem[Arora et~al., 2018]{arora2018understanding}
Arora, R., Basu, A., Mianjy, P., and Mukherjee, A. (2018).
\newblock Understanding deep neural networks with rectified linear units.
\newblock In {\em International Conference on Learning Representations}.

\bibitem[Bahat and Irani, 2016]{bahat2016blind}
Bahat, Y. and Irani, M. (2016).
\newblock Blind dehazing using internal patch recurrence.
\newblock In {\em 2016 IEEE International Conference on Computational
  Photography (ICCP)}, pages 1--9. IEEE.

\bibitem[Barnsley, 2014]{barnsley2014fractals}
Barnsley, M.~F. (2014).
\newblock {\em Fractals everywhere}.
\newblock Academic press.

\bibitem[Cire{\c{s}}An et~al., 2012]{cirecsan2012multi}
Cire{\c{s}}An, D., Meier, U., Masci, J., and Schmidhuber, J. (2012).
\newblock Multi-column deep neural network for traffic sign classification.
\newblock {\em Neural networks}, 32:333--338.

\bibitem[Cohen et~al., 2016]{cohen2016expressive}
Cohen, N., Sharir, O., and Shashua, A. (2016).
\newblock On the expressive power of deep learning: A tensor analysis.
\newblock In {\em Conference on Learning Theory}, pages 698--728.

\bibitem[Collobert and Weston, 2008]{collobert2008unified}
Collobert, R. and Weston, J. (2008).
\newblock A unified architecture for natural language processing: Deep neural
  networks with multitask learning.
\newblock In {\em Proceedings of the 25th international conference on Machine
  learning}, pages 160--167. ACM.

\bibitem[Cybenko, 1989]{cybenko1989approximation}
Cybenko, G. (1989).
\newblock Approximation by superpositions of a sigmoidal function.
\newblock {\em Mathematics of control, signals and systems}, 2(4):303--314.

\bibitem[Daubechies et~al., 2019]{ingrid}
Daubechies, I., DeVore, R., Foucart, S., Hanin, B., and Petrova, G. (2019).
\newblock Nonlinear approximation and (deep) relu networks.

\bibitem[Eldan and Shamir, 2016]{eldan2016power}
Eldan, R. and Shamir, O. (2016).
\newblock The power of depth for feedforward neural networks.
\newblock In {\em Conference on learning theory}, pages 907--940.

\bibitem[Falconer, 2004]{falconer2004fractal}
Falconer, K. (2004).
\newblock {\em Fractal geometry: mathematical foundations and applications}.
\newblock John Wiley \& Sons.

\bibitem[Grohs et~al., 2019]{DBLP:journals/corr/abs-1901-02220}
Grohs, P., Perekrestenko, D., Elbr{\"{a}}chter, D., and B{\"{o}}lcskei, H.
  (2019).
\newblock Deep neural network approximation theory.
\newblock {\em CoRR}, abs/1901.02220.

\bibitem[He and Zheng, 2018]{triangles}
He, J., L. L. X.~J. and Zheng, C. (2018).
\newblock Relu deep neural networks and linear finite elements.
\newblock {\em arXiv preprint arXiv:1807.03973}.

\bibitem[Hornik et~al., 1989]{Hornik1989Multilayer}
Hornik, K., Stinchcombe, M., and White, H. (1989).
\newblock Multilayer feedforward networks are universal approximators.
\newblock {\em Neural Netw.}, 2(5):359--366.

\bibitem[Irani, 2009]{irani2009super}
Irani, D. G. S. B.~M. (2009).
\newblock Super-resolution from a single image.
\newblock In {\em Proceedings of the IEEE International Conference on Computer
  Vision, Kyoto, Japan}, pages 349--356.

\bibitem[Jacquin, 1992]{jacquin1992image}
Jacquin, A.~E. (1992).
\newblock Image coding based on a fractal theory of iterated contractive image
  transformations.
\newblock {\em IEEE transactions on image processing}, 1(1):18--30.

\bibitem[Kovalsky et~al., 2019]{shahar}
Kovalsky, S., Dov, D., and Dym, N. (2019).
\newblock Poisson reconstruction over neural network function spaces.
\newblock {\em Technical report}.

\bibitem[Krizhevsky et~al., 2012]{krizhevsky2012imagenet}
Krizhevsky, A., Sutskever, I., and Hinton, G.~E. (2012).
\newblock Imagenet classification with deep convolutional neural networks.
\newblock In {\em Advances in neural information processing systems}, pages
  1097--1105.

\bibitem[Liang and Srikant, 2016]{liang2016deep}
Liang, S. and Srikant, R. (2016).
\newblock Why deep neural networks for function approximation?
\newblock {\em arXiv preprint arXiv:1610.04161}.

\bibitem[Manning et~al., 2014]{manning2014stanford}
Manning, C., Surdeanu, M., Bauer, J., Finkel, J., Bethard, S., and McClosky, D.
  (2014).
\newblock The stanford corenlp natural language processing toolkit.
\newblock In {\em Proceedings of 52nd annual meeting of the association for
  computational linguistics: system demonstrations}, pages 55--60.

\bibitem[Michaeli and Irani, 2013]{michaeli2013nonparametric}
Michaeli, T. and Irani, M. (2013).
\newblock Nonparametric blind super-resolution.
\newblock In {\em Proceedings of the IEEE International Conference on Computer
  Vision}, pages 945--952.

\bibitem[Mont{\'u}far, 2017]{montufar2017notes}
Mont{\'u}far, G. (2017).
\newblock Notes on the number of linear regions of deep neural networks.

\bibitem[Montufar et~al., 2014]{montufar2014number}
Montufar, G.~F., Pascanu, R., Cho, K., and Bengio, Y. (2014).
\newblock On the number of linear regions of deep neural networks.
\newblock In {\em Advances in neural information processing systems}, pages
  2924--2932.

\bibitem[{Piczak}, 2015]{Piczak2015Environmental}
{Piczak}, K.~J. (2015).
\newblock Environmental sound classification with convolutional neural
  networks.
\newblock In {\em 2015 IEEE 25th International Workshop on Machine Learning for
  Signal Processing (MLSP)}, pages 1--6.

\bibitem[Poggio et~al., 2017]{poggio2017and}
Poggio, T., Mhaskar, H., Rosasco, L., Miranda, B., and Liao, Q. (2017).
\newblock Why and when can deep-but not shallow-networks avoid the curse of
  dimensionality: a review.
\newblock {\em International Journal of Automation and Computing},
  14(5):503--519.

\bibitem[Qi et~al., 2017]{qi2017pointnet}
Qi, C.~R., Su, H., Mo, K., and Guibas, L.~J. (2017).
\newblock Pointnet: Deep learning on point sets for 3d classification and
  segmentation.
\newblock In {\em Proceedings of the IEEE Conference on Computer Vision and
  Pattern Recognition}, pages 652--660.

\bibitem[Sabetsarvestani et~al., 2019]{Sabetsarvestani2019Artificial}
Sabetsarvestani, Z., Sober, B., Higgitt, C., Daubechies, I., and Rodrigues, M.
  R.~D. (2019).
\newblock Artificial intelligence for art investigation: Meeting the challenge
  of separating x-ray images of the ghent altarpiece.
\newblock {\em Science Advances}.

\bibitem[Serra et~al., 2017]{serra2017bounding}
Serra, T., Tjandraatmadja, C., and Ramalingam, S. (2017).
\newblock Bounding and counting linear regions of deep neural networks.
\newblock {\em arXiv preprint arXiv:1711.02114}.

\bibitem[Smale et~al., 2010]{smale2010mathematics}
Smale, S., Rosasco, L., Bouvrie, J., Caponnetto, A., and Poggio, T. (2010).
\newblock Mathematics of the neural response.
\newblock {\em Foundations of Computational Mathematics}, 10(1):67--91.

\bibitem[Van~Pabst and Jense, 1996]{van1996dynamic}
Van~Pabst, J. V.~L. and Jense, H. (1996).
\newblock Dynamic terrain generation based on multifractal techniques.
\newblock In {\em High Performance Computing for Computer Graphics and
  Visualisation}, pages 186--203. Springer.

\bibitem[Welstead, 1999]{welstead1999fractal}
Welstead, S.~T. (1999).
\newblock {\em Fractal and wavelet image compression techniques}.
\newblock SPIE Optical Engineering Press Bellingham, Washington.

\bibitem[Yarotsky, 2017]{yarotsky2017error}
Yarotsky, D. (2017).
\newblock Error bounds for approximations with deep relu networks.
\newblock {\em Neural Networks}, 94:103--114.

\end{thebibliography}

	\appendix
	\section{Additional proofs}\label{sec:additional}
	\paragraph{Proof of Theorem~\ref{thm:open}.}
	Looking at the outline of the proof of Theorem~\ref{thm:strong}, the only item that needs to be adapted to the case of open sets is the construction of $ T $, as it is not clear how we should define it on the boundary of the domains $ f_j U_0 $.
	Assume w.l.o.g. that $0 \not \in \bar U_1 $. We define $T$ to be a (non-continuous) function which is equal to $f_j^{-1}(x) $ if $x \in f_j U_0 $ for some $j$, and is equal to zero otherwise. Note that Lemma~\ref{lem:CkiffTk} does not assume compactness of $C_0$, and so the sufficient and necessary conditions of item \ref{outline:TCkEquivalence} of the outline holds for the non-continuous $T$.

	Let $\varphi$ be the CPwL indicator function of $\bar U_1 $, which is constructed in Subsection~\ref{sub:phi}, and note that $\varphi$ satifies that $x \in U_1 $ if and only if $\varphi(x)>0$. Thus we can use \eqref{eq:phik} to define a (non-continuous) function $\varphi_k$ which satisfies $\varphi_k(x)>0 $ if and only if $x \in U_k $.

	Next we can use the same construction as in Subsection~\ref{sub:T} to define for $\delta>0 $ CPwL maps $T_i^{\delta}(x) $ which are zero if $\varphi_i(x)\leq 0 $, and are equal to $f_i^{-1} $ if $\varphi_i(x) \geq \delta $. We take the sum of these maps to obtain a CPwL map $T^\delta$ which converge pointwise to $T$ as $\delta \rightarrow 0 $. Next we replace $T$ with $T^\delta$ in the definition of $\varphi_k$ in \eqref{eq:phik} to obtain CPwL functions $\varphi^{\delta}_k$ which converge pointwise to $\varphi_k$ as $\delta \rightarrow 0$.

	To obtain pointwise convergence to the indicator of $U_k$ we need to compose $\varphi^{\delta}_k$ with a simple scalar function:
	For $a<b$ we define the function
	\begin{equation}\label{eq:fab}
	f_{a,b}(x)=\frac{1}{b-a}\left(\relu(x-a)-\relu(x-b) \right)=
	\left\{
	\begin{array}{lll}
	0      & \mbox{if } x \leq a \\
	\frac{x-a}{b-a}   & \mbox{if } a \leq x \leq b\\
	1      & \mbox{if } x \geq b
	\end{array}
	\right.
	\end{equation}
	Now we set $a= \varphi_k(0) $ which is negative since $0 \not \in  \bar U_1 $, and $b=0$, and obtain that $\psi_{\delta}=f_{a,b}\circ \varphi^{\delta}_k $ converges pointwise to the indicator of $U_k$ as $\delta \rightarrow 0 $. The $L^p(\K)$ convergence argument is identical to the argument in Lemma~\ref{lem:indicator}\qed

	\paragraph{Proof of Lemma~\ref{lem:indicator}.}  By choosing $f=f_{a,b} $ as defined in \eqref{eq:fab} with $a=-2, b=-1 $, we obtain
	that $\psi_t(x)=f \circ \varphi_C(tx) $ converges elementwise to $\chi_C$ as $t \rightarrow \infty$. For a fixed compact set $\K$ and fixed $1 \leq p < \infty $, we show $L^p(\K) $ convergence by showing that the integral of the functions
	$$g_t(x)=|\psi_t(x)-\chi_C(x)|^p $$
	converge to zero. This follows easily from the dominated convergence theorem since  $g_t $ converge pointwise to zero and are bounded uniformly for all $t$ by the constant function $1$ which is integrable in $\K$. \qed

	\paragraph{Dimension argument}
	\begin{Lemma}\label{lem:dim}
		If $\ell$ is larger than the number of parameters $p$ defining $\NNF{W,L}{1}{1} $, then not all \emph{CPwL} functions with $\ell$ knots are contained in $\NNF{W,L}{1}{1} $.
	\end{Lemma}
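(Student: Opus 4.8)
The plan is a dimension-counting argument: the family $\NNF{W,L}{1}{1}$ is the image of the finite-dimensional parameter space under a map that cannot raise dimension, whereas the CPwL functions with $\ell$ knots form a family of dimension roughly $\ell$, which exceeds $p$ by hypothesis. Writing $p$ for the number of real weights and biases of a width-$W$, depth-$L$ architecture, let $R:\RR^p\to C(\K)$ be the realization map sending a parameter vector $\theta$ to the function $f_\theta$ it computes on $\K=[A,B]$. First I would record the structural fact that $R$ is \emph{piecewise polynomial}: for a fixed input $x$ and a fixed ReLU activation pattern, $f_\theta(x)$ is a polynomial in $\theta$ (a product of the affine layers applied to $x$), and the activation patterns cut any compact parameter box into finitely many polyhedral regions. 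Hence $R$ is locally Lipschitz as a map into $(C(\K),\|\cdot\|_\infty)$. This is the crucial point that rules out the space-filling behaviour a merely continuous map could exhibit.

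Next I would set up the target family as a linear space. Fix $\ell$ distinct interior points $t_1<\cdots<t_\ell$ of $\K$ and let $V\subset C(\K)$ be the space of CPwL functions whose breakpoints lie in $\{t_1,\dots,t_\ell\}$. Parametrizing an element of $V$ by its value and slope at the left endpoint together with the $\ell$ slope increments at the knots gives $\dim V=\ell+2$, and the subset $V_\ell\subseteq V$ of functions with exactly $\ell$ active knots (nonzero increments) is open and of full measure in $V$. Evaluation at the $\ell+2$ points $A,t_1,\dots,t_\ell,B$ defines a linear map $P:C(\K)\to\RR^{\ell+2}$ whose restriction $P|_V$ is a linear isomorphism onto $\RR^{\ell+2}$.

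Finally I would combine these. The composite $P\circ R:\RR^p\to\RR^{\ell+2}$ is locally Lipschitz; writing $\RR^p$ as a countable union of compact cubes and using that a Lipschitz image of a $p$-dimensional set has Hausdorff dimension at most $p<\ell+2$ (since $\ell>p$), the image $P\circ R(\RR^p)$ has Lebesgue measure zero in $\RR^{\ell+2}$. Pulling this back through the isomorphism $P|_V$, the set $S=\{g\in V:\,P(g)\in P\circ R(\RR^p)\}$ has measure zero in $V$, so $V_\ell\setminus S$ is nonempty. Any $g$ in it has exactly $\ell$ knots yet satisfies $g\neq f_\theta$ for every $\theta$ (otherwise $P(g)=P\circ R(\theta)$ would lie in the image), hence $g\notin\NNF{W,L}{1}{1}$, which proves the lemma.

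I expect the main obstacle to be the first step, namely justifying rigorously that $R$ does not increase dimension. A bare continuity statement would not suffice, since continuous surjections from lower- onto higher-dimensional spaces exist; the argument genuinely relies on the piecewise-polynomial, and therefore locally Lipschitz, dependence of ReLU network functions on their parameters, together with the countable exhaustion of $\RR^p$ by compact sets so that the measure-zero conclusion survives the union over all activation regions.
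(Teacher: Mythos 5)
Your proof is correct and follows essentially the same route as the paper's: evaluate the network at finitely many fixed points, observe that the dependence on parameters is piecewise polynomial and hence locally Lipschitz on compact sets, exhaust $\RR^p$ by a countable family of compacta, and conclude that the image has Hausdorff dimension at most $p<\ell$, so some CPwL function with $\ell$ knots is missed. Your version adds two refinements the paper leaves implicit---evaluating at $\ell+2$ points so that the target family $V$ is linearly isomorphic to $\RR^{\ell+2}$, and a genericity step guaranteeing the witness has \emph{exactly} $\ell$ knots---though note that the fixed-activation-pattern regions in parameter space are semialgebraic rather than polyhedral (preactivations of deeper layers are polynomial, not affine, in $\theta$), which does not affect the local Lipschitz conclusion.
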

	\begin{proof}
		Denote the set of parameters by $\theta$, and denote by $f_\theta$ the NNF defined through $\theta$.
		Choose any $t_1,t_2,\ldots,t_\ell \in \RR $.
		It is sufficient to show that the function $F:\RR^p \to \RR^\ell$ defined by
		$$F_i(\theta)= f_{\theta}(t_i), i=1,\ldots \ell $$
		is not onto.
		Indeed, $F$ is continuous as the parameter space $\RR^p$ can be partitioned into a finite number of sets on which $F$ is a multivariate polynomial whose degree is determined by the degree of the network (for example, the composition of two affine transformations results with a second order polynomial in the coefficients).
		It follows that the restriction of $F$ to a closed ball $\bar B_N$ of radius $N$ is Lipschitz, and therefore it is known (see Chapter 2 in  \cite{falconer2004fractal}) that its image has Hausdorff dimension $\leq p $.
		The image of $F$ is the countable  union of $F(B_N) $, hence it has Hausdorff dimension $\leq p < \ell $ as well  and so $F$ is not onto.
	\end{proof}

	\section{IFS examples}
	\textbf{The Sierpinski triangle:}
	The Sierpinski triangle $K_{\textrm{Sierpinski}}$ is the attractor of an IFS on $\RR^2$.
	It is defined using an equilateral triangle $T$ with vertices $v_1,v_2, v_3 \in \RR^2 $, where $ \IFS{F}{f_1, f_2, f_3} $ are defined to be
	$$f_i(x)=1/2x+v_i,~~ i=1,2,3.$$
	The open triangle $U_0=T^\circ$ is a nice open set.
	Figure~\ref{fig:IFS}(b) shows the first few sets  $U_i$ obtained from this choice of $U_0$.

	\textbf{The Koch curve:}
	The Koch curve $K_{\textrm{Koch}}$ is the attractor of an IFS on $\RR^2$.
	It is defined via four maps $f_i(x)=A_i(x)+b_i $ where
	\begin{align*}
	A_1&=\frac{1}{3}
	\begin{bmatrix}
	1 & 0 \\
	0 & 1
	\end{bmatrix}
	, \, b_1=\begin{bmatrix}
	0 \\
	0
	\end{bmatrix}
	, \quad A_2=
	\begin{bmatrix}
	1/6 & -\sqrt{3}/6 \\
	\sqrt{3}/{6}  & {1}/{6}
	\end{bmatrix}
	, \,
	b_2=\begin{bmatrix}
	{1}/{3} \\
	0
	\end{bmatrix}\\
	A_3&=A_2^T,
	\, b_3=\begin{bmatrix}
	1/2 \\
	\sqrt{3}/6
	\end{bmatrix}
	, \quad A_4=A_1, \,
	b_4=\begin{bmatrix}
	2/3 \\
	0
	\end{bmatrix}.
	\end{align*}
	The open triangle with vertices
	$$v_1=\begin{bmatrix}
	0\\
	0
	\end{bmatrix}
	, \, v_2=\begin{bmatrix}
	1/2\\
	\sqrt{3}/2
	\end{bmatrix}
	, \,v_3=\begin{bmatrix}
	1\\
	0
	\end{bmatrix}
	$$
	is a nice open set w.r.t. this IFS.
	Figure~\ref{fig:IFS}(c) shows the first few sets  $U_i$ obtained from this choice of $U_0$.

	\section{Lifting}\label{app:lift} Any IFS on $ \RR^d $ composed of affine and invertible functions can be lifted to an IFS on $ \RR^{d+1} $ fulfilling the conditions of \ref{thm:strong}.
	Assume we are given such an IFS $\{f_1,\ldots,f_J \}$ on $\RR^d$ and an initial convex polytope $C_0$ fulfilling \eqref{eq:contained} (such a set always exists). We can then define a new IFS $\{\hat f_1,\ldots, \hat f_J \}$ on $\RR^{d+1}$ by
	$$\hat f_j(x,t)=\left( f_j(x), \, \frac{1}{3(J+1)}t+\frac{j}{J+1} \right) .$$
	The convex polytope $\hat C_0=C_0 \times [0,1] $ is a nice compact subset of $\RR^{d+1}$, and so we can use Theorem~\ref{thm:strong} to construct the indicator function of  $\hat C_k$, which is related to the indicator function of $C_k$ via
	$$\chi_{C_k}(x)=\max_{t \in [0,1]} \chi_{\hat C_k}(x,t) $$
\end{document}